\theoremstyle{plain}
\newtheorem{lemma}{Lemma}[section]
\numberwithin{equation}{section}
\renewcommand{\vec}[1]{\ensuremath{\mathbf{#1}}}
\renewcommand{\b}[1]{\ensuremath{\mathbb{#1}}}
\newcommand{\eps}{\varepsilon}
\newcommand{\diag}{\mathop{\rm diag}}
\newcommand{\trace}{\mathop{\rm tr}}
\newcommand{\argmax}{\operatorname*{arg\,max}}
\newcommand{\balpha}{\boldsymbol{\alpha}}
\newcommand{\bmu}{\boldsymbol{\mu}}
\newcommand{\vone}{\vec{1}}
\newcommand{\innerprod}[2]{\left\langle #1, #2 \right\rangle}
\newcommand{\uni}{\textsc{\mbox{Uniform}}\xspace}
\newcommand{\llplus}{\textsc{\mbox{LibLinear+}}\xspace}
\newcommand{\sdp}{\textsc{SdpMKL}\xspace}
\newcommand{\silp}{\textsc{SILP}\xspace}
\newcommand{\smkl}{\textsc{SimpleMKL}\xspace}
\newcommand{\lmkl}{\textsc{LevelMKL}\xspace}
\newcommand{\glmkl}{\textsc{GroupMKL}\xspace}
\newcommand{\smomkl}{\textsc{SMOMKL}\xspace}
\newcommand{\mwumkl}{\textsc{\mbox{MWUMKL}}\xspace}
\newcommand{\iono}{\textit{Iono}\xspace}
\newcommand{\cancr}{\textit{Breast Cancer}\xspace}
\newcommand{\pima}{\textit{Pima}\xspace}
\newcommand{\sonar}{\textit{Sonar}\xspace}
\newcommand{\heart}{\textit{Heart}\xspace}
\newcommand{\vote}{\textit{Vote}\xspace}
\newcommand{\wdbc}{\textit{WDBC}\xspace}
\newcommand{\wpbc}{\textit{WPBC}\xspace}
\newcommand{\mush}{\textit{Mushroom}\xspace}
\newcommand{\adult}{\textit{Adult}\xspace}
\newcommand{\web}{\textit{Web}\xspace}
\newcommand{\cod}{\textit{CodRna}\xspace}
\title{A Geometric Algorithm for Scalable Multiple Kernel Learning}
\author{John Moeller\\moeller@cs.utah.edu \and Parasaran Raman\\praman@yahoo-inc.com \and Avishek Saha\\avishek2@yahoo-inc.com \and Suresh Venkatasubramanian\\suresh@cs.utah.edu}
\begin{document}

\maketitle

\begin{abstract}
We present a geometric formulation of the Multiple Kernel Learning (MKL) problem. 
To do so, we reinterpret the problem of learning kernel weights as searching for a kernel that maximizes the minimum (kernel) distance between two convex polytopes. 
This interpretation combined with novel structural insights from our geometric formulation allows us to reduce the MKL problem to a simple optimization routine that yields provable convergence as well as quality guarantees. 
As a result our method scales efficiently to much larger data sets than most prior methods can handle. 
Empirical evaluation on eleven datasets shows that we are significantly faster and even compare favorably with a uniform unweighted combination of kernels.

\end{abstract}

\section{Introduction}
\label{sec:intro}
Multiple kernel learning is a principled alternative to choosing kernels (or kernel weights) and has been successfully applied to a wide variety of learning tasks and domains \cite{Lanckriet04MKLSDP,Bach04MKL,Argyriou:2006:DAK:1143844.1143850,Zien:2007:MMK:1273496.1273646,DBLP:conf/nips/CristianiniSEK01,Ye:2007:DKR:1273496.1273634,Pavlidis_wn:gene,Sonnenburg06MKLSILP}.
Pioneering work by \citet{Lanckriet04MKLSDP} jointly optimizes the Support Vector Machine (SVM) task and the choice of kernels by exploiting convex optimization at the heart of both problems.
Although theoretically elegant, this approach requires repeated invocations of semidefinite solvers. 
Other existing methods~\cite{Sonnenburg06MKLSILP,Lanckriet04MKLSDP,Rakotomamonjy07MKL,DBLP:conf/nips/XuJKL08,DBLP:conf/icml/XuJYKL10}, albeit accurate, are slow and have large memory footprints.

In this paper, we present an alternate \emph{geometric} perspective on the MKL problem. 
The starting point for our approach is to view the MKL problem as an optimization of kernel distances over convex polytopes.
The ensuing formulation is a Quadratically Constrainted Quadratic Program (QCQP) which we solve using a novel variant of the Matrix Multiplicative Weight Update (MMWU) method of \citet{Arora:2007:CPA:1250790.1250823}; a primal-dual combinatorial algorithm for solving Semidefinite Programs (SDP) and QCQPs. 
While the MMWU approach in its generic form does not yield an efficient solution for our problem, we show that a careful geometric reexamination of the primal-dual algorithm reveals a simple alternating optimization with extremely light-weight update steps. 
This algorithm can be described as simply as: ``find a few violating support vectors with respect to the current kernel estimate, and reweight the kernels based on these support vectors''. 

Our approach 
\begin{inparaenum}[(a)] 
\item does not require commercial cone or SDP solvers, 
\item does not make explicit calls to SVM libraries (unlike alternating optimization based methods),
\item provably converges in a fixed number of iterations, and 
\item has an extremely light memory footprint.
\end{inparaenum}
Moreover, our focus is on optimizing MKL on a \emph{single machine}. 
Existing techniques~\cite{Sonnenburg06MKLSILP} that use careful engineering to parallelize MKL optimizations in order to scale can be viewed as complementary to our work.
Indeed, our future work is focused on adding parallel components to our already fast optimization method.

A detailed evaluation on eleven datasets shows that our proposed algorithm 
\begin{inparaenum}[(a)]
\item is fast, even as the data size increases beyond a few thousand points, 
\item compares favorably with LibLinear~\cite{REF08a} after Nystr\"om kernel approximations are applied as feature transformations, and 
\item compares favorably with the \emph{uniform} heuristic that merely averages all kernels without searching for an optimal combination. 
\end{inparaenum}
As has been noted~\cite{uniform}, the \emph{uniform} heuristic is a strong
baseline for the evaluation of MKL methods. We use LibLinear with Nystr\"om approximations (\llplus) as an additional scalable baseline, and we are able to beat both these baselines when both $m$ and $n$ are significantly large.

\section{Related Work}
\label{sec:related}

In practice, since the space of all kernels can be unwieldy, many methods operate by fixing a base set of kernels and determining an optimal (conic) combination. 
An early approach (\uni) eliminated the search and simply used an equal-weight sum of kernel functions~\cite{Pavlidis_wn:gene}. 
In their seminal work, \citet{Lanckriet04MKLSDP} proposed to simultaneously train an SVM as well as learn a convex combination of kernel functions. 
The key contribution was to frame the learning problem as an optimization over positive semidefinite kernel matrices which in turn reduces to a QCQP.
. 
Soon after, \citet{Bach04MKL} proposed a block-norm regularization method based on \emph{second order cone programming} (SOCP). 

For efficiency, researchers started using alternating optimization methods that alternate between updating the classifier parameters and the kernel weights. 
\citet{Sonnenburg06MKLSILP} modeled the MKL objective as a cutting plane problem and solved for kernel weights using Semi-Infinite Linear Programming (\silp) techniques.
\citet{Rakotomamonjy07MKL} used sub-gradient descent based methods to solve the MKL problem. 
An improved level set based method
that combines cutting plane models with projection to level sets was proposed by \citet{DBLP:conf/nips/XuJKL08}. 
\citet{DBLP:conf/icml/XuJYKL10}
also derived a variant of the equivalence between group LASSO and the MKL formulation that leads to closed-form updates for kernel weights.
However, as pointed out in~\cite{uniform}, most of these methods do not compare favorably (both in accuracy as well as speed) even with the simple \emph{uniform} heuristic.

Other works in MKL literature study the use of different kernel families, such as Gaussian families~\cite{Micchelli:2005:LKF:1046920.1088710}, hyperkernels~\cite{DBLP:journals/jmlr/OngSW05} and non-linear families~\cite{DBLP:conf/icml/VarmaB09,DBLP:conf/nips/CortesMR09}. 
Regularization based on the $\ell_2$-norm~\cite{DBLP:journals/jmlr/KloftBSZ11} and $\ell_p$-norm~\cite{NIPS2009_0879,DBLP:conf/nips/VishwanathansAV10} have also been introduced. 
In addition, stochastic gradient descent based online algorithms for MKL have been studied in~\cite{DBLP:conf/icml/OrabonaL11}.
Another work by \citet{DBLP:conf/kdd/JainVV12} discusses a scalable MKL algorithm for dynamic kernels. We briefly discuss and compare with this work when presenting empirical results (Section~\ref{sec:results}).

In \emph{two-stage kernel learning}, instead of combining the optimization of kernel weights as well as that of the best hypothesis in a single cost function, the goal is to learn the kernel weights in the first stage and then use it to learn the best classifier in the second stage.
Recent two-stage approaches seem to do well in terms of accuracy -- such as \citet{DBLP:conf/icml/CortesMR10}, who optimize the kernel weights in the first stage and learn a standard SVM in the second stage, and \citet{bib-kumar-icml-12}, who train on meta-examples derived from kernel combinations on the ground examples.
In~\citet{DBLP:conf/icml/CortesMR10}, the authors observe that their algorithm reduces to solving a meta-SVM which can be solved using standard off-the-shelf SVM tools such as LibSVM.
However, despite being highly efficient on few examples, LibSVM is very inefficient on more than a few thousand examples due to quadratic scaling~\cite{libsvm}.
As for~\citet{bib-kumar-icml-12}, the construction of meta-examples scales quadratically in the number of samples and so their algorithm may not scale well past the small datasets evaluated in their work.

Interestingly, our proposed \mwumkl can easily be run as a \emph{single}-kernel algorithm.
We can then apply our scalability to the two-stage algorithm of~\cite{DBLP:conf/icml/CortesMR10}, allowing it not to be limited by the same constraints as LibSVM (which scales quadratically or worse in the number of examples~\cite{libsvm}).

\section{Background} 
\label{sec:back}

\paragraph{Notation.}
We will denote vectors by boldface lower case letters like $\vec{z}$, and matrices by bold uppercase letters $\vec{M}$. 

\begin{table}[!htbp]
  \footnotesize
  \centering
  \begin{tabular}{cl}
    $\vec{0}$ & zero vector or matrix
    \\$\vec{1}$ & all-ones vector or matrix
    \\$\vec{M} \succeq 0$ & $\vec{M}$ is positive semidefinite
    \\$\vec{A} \bullet \vec{B}$ & $\text{Tr}(\vec{A}\vec{B}) = \sum_{i,j}A_{ij}B_{ij}$
    \\$\diag(\vec{a})$ & The diagonal matrix $\vec A$ such that $A_{ii} = a_i$
  \end{tabular}
\end{table}

\paragraph{Modeling the geometry of SVM.}
Suppose that $\vec{X} \in \b{R}^{n \times d}$ is a collection of $n$ training samples in a $d$-dimensional vector space (the rows $\vec{x}_1, \vec{x}_2, \ldots, \vec{x}_n$ are the points). 
Also, $\vec{y} = (y_1, y_2, \ldots, y_n) \in \{ -1, +1 \}^n$ are the binary class labels for the data points in $\vec{X}$. 
Let $\vec{X}_+\subset\vec{X}$ denote the rows corresponding to the positive entries of $\vec{y}$, and likewise $\vec{X}_-\subset\vec{X}$ for the negative entries. 

\begin{figure}[htbp]
  \centering
  \includegraphics[width=.8\columnwidth]{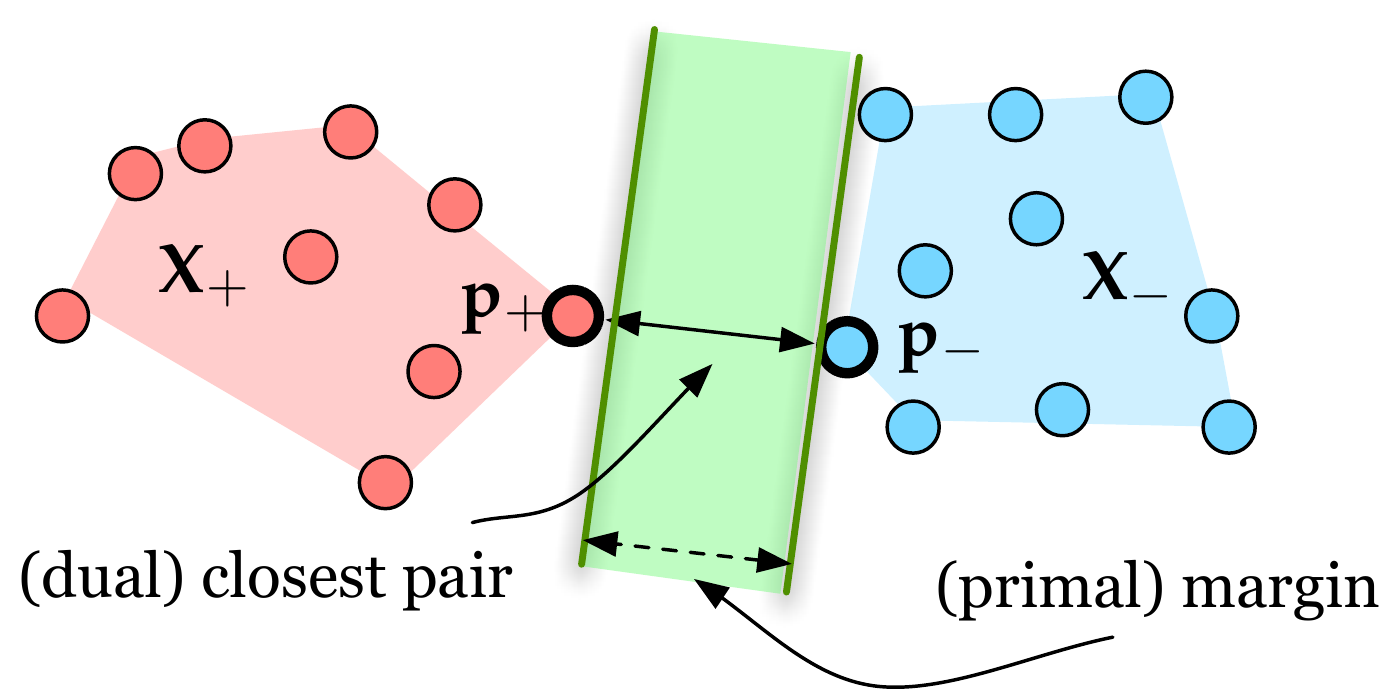}
  \caption{\footnotesize Illustration of primal-dual relationship for classification.}
  \label{fig:svm}
\end{figure}
From standard duality, the maximum margin SVM problem is equivalent to \emph{finding the shortest distance between the convex hulls of $\vec{X}_+$ and $\vec{X}_-$}. 
This shortest distance between the hulls will exist between two points on the respective hulls (see Figure~\ref{fig:svm}). 
Since these points are in the hulls, they can be expressed as some convex combination of the rows of $\vec{X}_+$ and $\vec{X}_-$, respectively. 
That is, if $\vec{p}_+$ is the closest point on the positive hull, then
$\vec{p}_+$ can be expressed as $\balpha_+^\top\vec{X}_+$, where $\balpha_+^\top\vec{1} = 1$ and
$\alpha_j \geq 0$, with a similar construction for $\vec{p}_-$ and $\balpha_-$. 

This in turn can be written as an optimization
\begin{align}
  \quad & \min_{\balpha} \frac{1}{2} \|\vec{p}_+-\vec{p}_-\|^2
  \label{eq:geom-svm}
\\\text{s.t.}\ 
  \quad & \balpha_+^\top \mathbf{1} = 1,
  \quad \balpha_-^\top \mathbf{1} = 1,
  \quad \balpha_+, \balpha_- \geq 0
  \notag
\end{align}
Collecting all the $\alpha$ terms together by defining $\alpha_j \triangleq
\alpha_{y_j,j}$, and expanding the distance term $\|\vec{p}_+-\vec{p}_-\|^2$, it is straightforward to show that
Problem~\eqref{eq:geom-svm} is equivalent to 
\begin{align}
  \quad & \min_{\balpha} \frac{1}{2} \balpha^{\top} \vec{Y} \vec{X} \vec{X}^\top \vec{Y} \balpha - \balpha^\top\vec{1}
  \label{eq:dual-svm} \\
  \text{s.t.}\ 
  \quad & \balpha^\top\vec{y} = 0,
  \quad \alpha_i \geq 0.
  \notag
\end{align}
where $\balpha^{\top} \vec{Y} \vec{X} \vec{X}^\top \vec{Y} \balpha$ is merely a
compact way of writing $\sum_{j,k\in
  X}\alpha_{j}\alpha_{k}y_jy_k\innerprod{\vec{x}_{j}}{\vec{x}_{k}}$.
Problem \eqref{eq:dual-svm} is of course the familiar dual SVM problem. 
The equivalence of \eqref{eq:geom-svm} and \eqref{eq:dual-svm} is well known, so we decline to prove it here; see \citet{Bennett:2000:DGS:645529.657972} for a proof of this equivalence.

\paragraph{Kernelizing the dual.}
The geometric interpretation of the dual does not change when the examples are transformed by a \emph{reproducing kernel Hilbert space} (RKHS).
The Euclidean norm of the base vector space in $\|\vec{p}_+-\vec{p}_-\|^2$ is merely substituted with the RKHS norm:
$$
  \|\vec{p}_+-\vec{p}_-\|_{\kappa}^2 = \kappa(\vec{p}_+,\vec{p}_+) + \kappa(\vec{p}_-,\vec{p}_-) - 2\kappa(\vec{p}_+,\vec{p}_-),
$$
where the \emph{kernel function} $\kappa$ stands in for the inner product.
This is dubbed the \emph{kernel distance} \cite{DBLP:journals/corr/abs-1103-1625} or the maximum mean
discrepancy \cite{mmd-start}.
The dual formulation then changes slightly, with the covariance
term $\vec{X} \vec{X}^\top$ being replaced by the kernel matrix $\vec{K}$. 
For brevity, we will define $\vec{G} \triangleq \vec{Y}\vec{K}\vec{Y}$.

\paragraph{Multiple kernel learning.}
\emph{Multiple kernel learning} is simply the SVM problem with the additional
complication that the kernel function is unknown, but is expressed as some
function of other known kernel functions.

Following standard practice~\cite{Lanckriet04MKLSDP} we assume that the kernel function is a \emph{convex combination} of other kernel functions; 
i.e., that there is some set of coefficients $\mu_i > 0$, that $\sum\mu_i = 1$, and that $\kappa = \sum\mu_i\kappa_i$ (which implies that the Gram matrix version is $\vec{K} = \sum\mu_i\vec{K}_i$). 
We regularize by setting $\trace(\vec{K}) = 1$~\cite{Lanckriet04MKLSDP}.
The dual problem then takes the following form~\citep{Lanckriet04MKLSDP}:
\begin{align} 
  \quad & \max_{\vec{K}} \min_{\balpha} 
  \quad \frac{1}{2}\balpha^{\top} \vec{G} \balpha - \balpha^{\top} \vec{1} \label{eq:mkl-main}
  \\
  \text{s.t.}\ 
  \quad & \vec{K} = \sum_{i=1}^m \mu_i \vec{K}_i,  
  \quad \trace(\vec{K}) = 1, 
  \quad \vec{K} \succeq 0, 
  \quad \bmu \geq 0  \notag
\end{align}
When juxtaposed with \eqref{eq:geom-svm} and \eqref{eq:dual-svm}, this can be interpreted as searching for the kernel that maximizes the shortest (kernel) distance between polytopes.

\section{Our Algorithm}
\label{sec:new-alg}

The MKL formulation of \eqref{eq:mkl-main} can be transformed (as we shall see later) into a
quadratically-constrained quadratic problem that can be solved by a number of
different solvers~\cite{Lanckriet04MKLSDP,mosek,sedumi}. However, this approach requires a memory
footprint of $\Theta(mn^2)$ to store all kernel matrices. Another approach
would be to exploit the $\min$-$\max$ structure of \eqref{eq:mkl-main} via an
alternating optimization: note that the problem of finding the shortest distance
between polytopes for a fixed kernel is merely the standard SVM
problem. There are two problems with this approach: \begin{inparaenum}[(a)] \item standard SVM
algorithms do not scale well with $m, n$, and \item it is not obvious how to adjust
kernel weights in each iteration.\end{inparaenum}

\paragraph*{Overview.}
\label{sec:overview}

Our solution exploits the fact that a QCQP is a special case of a general SDP. We do this in order to apply the \emph{combinatorial} primal-dual matrix multiplicative weight update (MMWU) algorithm of \citet{Arora:2007:CPA:1250790.1250823}. 
While the generic MMWU has expensive steps (a linear program and matrix exponentiation), we show how to exploit the structure of the MKL QCQP to yield a very simple alternating approach.
In the ``forward'' step, rather than solving an SVM, we
merely find two support vector that are ``most violating'' normal to the current
candidate hyperplane (in the lifted feature space). In the ``backward'' step, we
reweight the kernels involved using a matrix exponentiation that we reduce to a
\emph{closed form} computation without requiring expensive matrix
decompositions. 
Our speedup comes from the facts that 
\begin{inparaenum}
\item[(a)] the updates to support vectors are sparse (at most two in each step) and 
\item[(b)] that the backward step can be computed very efficiently. 
\end{inparaenum}
This allows us to reduce our memory footprint to $O(mn)$. 

\paragraph{QCQPs and SDPs.}
\label{sec:qcqps-sdps}

We start by using an observation due to \citet{Lanckriet04MKLSDP} to convert
\eqref{eq:mkl-main}\footnote{We note that \eqref{eq:qcqp-mkl} is the \emph{hard-margin} version of the MKL problem. The standard soft-margin variants can also be placed in this general framework~\cite{Lanckriet04MKLSDP}. 
For the $1$-norm soft margin, we add the  constraint that all terms of $\balpha$ are upper bounded by the margin constant $C$.
For the $2$-norm soft margin, another term $\frac{1}{C}\balpha^{\top}\balpha$ appears in the objective, or we can simply add a constant multiple of $\vec{I}$ to each $\vec{G}_i$.} 
 into the following QCQP:
\begin{align}
  \max_{\balpha,s} \quad & (2\balpha^{\top}\vec{1} - s) 
  \label{eq:qcqp-mkl} \\ 
  \textrm{s.t.}\ 
  \quad & s \geq \frac{1}{r_i} \balpha^{\top} \vec{G}_i \balpha, 
  \quad \balpha^{\top} \vec{y} = 0, 
  \quad \balpha \geq 0 \notag
\end{align}
where $\vec{G}_i = \vec{Y}\vec{K}_i\vec{Y}$, $r \in \b{R}^m$, and $r_i = \trace(\vec{K}_i)$.

Next, we rewrite \eqref{eq:qcqp-mkl} in canonical SDP form in order to apply the
MMWU framework:
\begin{align}
  \omega^* = \max_{\balpha, s} &\quad 2\balpha^{\top}\vec{1} - s 
  \label{eq:10} \\
  \text{s.t.} & \quad\forall i\in [1..m] \quad 
  \vec{Q}_i(\balpha) = \begin{pmatrix}
    \vec{I}_n & \vec{A}_i\balpha
    \\ (\vec{A}_i\balpha)^\top & s                
  \end{pmatrix}, 
  \notag \\ 
  & \quad \vec{Q}_i(\balpha) \succeq \vec{0}, 
  \quad \balpha^{\top}\vec{y} = 0, 
  \quad \balpha \geq \vec{0}. 
  \notag
\end{align}
where $\vec{A}_i^{\top} \vec{A}_i = \frac{1}{r_i}\vec{G}_i$ for all $i \in
[0..m]$. 

\paragraph{The MMWU framework.}
We give a brief overview of the MMWU framework of
\citet{Arora:2007:CPA:1250790.1250823} (for more details, the reader is directed
to Satyen Kale's thesis~\cite{kale2007efficient}). The approach starts with a ``guess''
$\omega$ for the optimal value $\omega^*$ of the SDP (and uses a binary search
to find this guess interleaved with runs of the algorithm). Assuming that this guess
at the optimal value is correct, the algorithm then attempts to find either a
feasible primal ($\vec{P}$) or dual assignment such that this guess is achieved. 
\begin{algorithm}[H]
  \footnotesize
  \caption{MMWU template~\cite{Arora:2007:CPA:1250790.1250823}\label{alg:arora}}
  \begin{algorithmic}
    \REQUIRE $\eps$, primal $\vec{P}^{(1)}$, rounds $T$, guess $\omega$
    \FOR{$t = 1 \ldots T$}
    \STATE \emph{forward:} Compute update to $\balpha^{(t)}$ based on constraints, $\vec{P}^{(t)}$ and $\balpha^{(t)}$
    \STATE \emph{backward:} Compute $\vec{M}^{(t)}$ from constraints and
    $\balpha^{(t)}$. 
     \STATE \qquad \qquad $\vec{W}^{(t+1)} \leftarrow e^{-\eps \sum_{t=1}^t \vec{M}^{(t)}}$
    \STATE \qquad \qquad $\vec{P}^{(t+1)} \leftarrow \frac{\vec{W}^{(t+1)}}{\text{Tr} (\vec{W}^{(t+1)})}$
    \ENDFOR
    \ENSURE $\vec{P}^{(T)}$ 
  \end{algorithmic}
\end{algorithm}
The process starts with some assignment to $\vec{P}^{(1)}$ (typically the identity matrix $\vec{I}$). 
If this assignment is both primal feasible and at most $\omega$, the process ends. 
Else, there must be some assignment to $\balpha$ (the dual) that ``witnesses'' this lack of feasibility or optimality, and it can be found by solving a linear program using the current primal/dual assignments and constraints (i.e., is positive, has dual value at least $\omega$, and satisfies constraints \eqref{eq:qcqp-mkl}). 

The primal constraints and $\balpha$ are then used to guide the search for a new primal assignment $\vec{P}^{(t+1)}$.
They are combined to form the matrix $\vec{Q}_i(\balpha^{(t)})$ (see \eqref{eq:qcqp-mkl}), and then adjusted to form an ``event matrix'' $\vec{M}^{(t)}$ (see Paragraph ``the backward step'' for details)\footnote{$\vec{M}^{(t)}$ generalizes the loss incurred by experts in traditional MWU -- by deriving $\vec{M}^{(t)}$ from the SDP constraints, the duality gap of the SDP takes the role of the loss.}.
Exponentiating the sum of all the observed $\vec{M}^{(t)}$ so far, the algorithm exponentially re-weights primal constraints that are more important, and the process repeats.
By minimizing the loss, the assignments to $\vec{P}^{(t)}$ and $\balpha^{(t)}$ are guaranteed to result in an SDP value that approximates $\omega^*$ within a factor of $(1+\epsilon)$.

\subsection{Our algorithm}
We now adapt the above framework to solve the MKL SDP given by \eqref{eq:10}. 
As we will explain below, we can assign $\omega^*$ \emph{a priori} in most cases and we can solve our problem with only one round of feasibility search.  
We denote the dual update in iteration $t$ by $\balpha^{(t)}$, the $i^{\text{th}}$ event matrix in iteration $t$ by $\vec{M}_i^{(t)}$ and the $i^{\text{th}}$ primal variable (matrix) in iteration $t$ by $\vec{P}_i^{(t)}$. $\vec{P}_i^{(t)}$ is closely related to the desired primal kernel coefficients $\mu_i$.  
We denote $\balpha = \sum_i \balpha^{(i)}$ as the accumulated dual assignment thus far and $\vec{M}_i = \sum_t
\vec{M}_i^{(t)}$ as the accumulated $i^{\text{th}}$ event matrix.

\subsubsection{The backward step}
\label{sec:backward-step}

It will be convenient to explain the backward step first. 
Given $\balpha^{(t)}$ and $\vec{Q}_i(\balpha^{(t)})$, we define 
$
\vec{M}_i^{(t)} \triangleq \frac{1}{2\rho}(\vec{Q}_i(\balpha^{(t)}) + \rho \vec{I}_{n+1})
$ 
where $\rho$ is a rate parameter to be set later. 
Note that $\vec{M}_i^{(t)}$ (and $\vec{M}^{(t)}$) is ``almost-diagonal'', taking the form 
$
  \begin{bmatrix} 
    a \vec{I}_n & \vec{u} 
    \\ \vec{u^\top} & a
  \end{bmatrix}
$.  
Such  matrices can be exponentiated in closed form.
\begin{lemma}
  The exponential of a matrix in the form
  $
  \begin{pmatrix}
    a\vec{I}_n & \vec{u} \\
    \vec{u}^{\top} & a
  \end{pmatrix},
  $
  where $a \ge 0$ and $\hat{\vec u} = \vec{u}/\|\vec u\|$, is
\[   e^a\Bigl[
    \begin{pmatrix}
      \cosh \|\vec{u}\|\hat{\vec u}\hat{\vec u}^{\top} & \sinh \|\vec{u}\| \hat{\vec u} \\
      \sinh \|\vec{u}\| \hat{\vec u}^{\top} & \cosh \|\vec{u}\| 
    \end{pmatrix} 
    +\begin{pmatrix}
      \vec{I}_n-\hat{\vec u}\hat{\vec u}^{\top} & 0 \\
      0 & 0
    \end{pmatrix}\Bigr].
\]
  \label{lemma:mat-exp}
\end{lemma}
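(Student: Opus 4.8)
The plan is to reduce the computation to a power series in the single off-diagonal block and then recognize the resulting scalar series as hyperbolic functions. First I would split the matrix as $a\vec{I}_{n+1} + \vec{N}$, where
\[
  \vec{N} = \begin{pmatrix} \vec{0} & \vec{u} \\ \vec{u}^\top & 0 \end{pmatrix}.
\]
Since $a\vec{I}_{n+1}$ is a scalar multiple of the identity, it commutes with $\vec{N}$, so $e^{a\vec{I}_{n+1}+\vec{N}} = e^a\, e^{\vec{N}}$ and the entire task reduces to computing $e^{\vec{N}}$.

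The key structural step is to understand the powers of $\vec{N}$. Writing $c = \|\vec{u}\|$ and $\hat{\vec u} = \vec{u}/\|\vec{u}\|$, a direct block multiplication gives $\vec{N}^2 = c^2\vec{P}$, where $\vec{P} = \begin{pmatrix} \hat{\vec u}\hat{\vec u}^\top & \vec{0} \\ \vec{0} & 1 \end{pmatrix}$. Because $\hat{\vec u}^\top\hat{\vec u} = 1$, the block $\hat{\vec u}\hat{\vec u}^\top$ is idempotent, so $\vec{P}$ is a projection ($\vec{P}^2 = \vec{P}$) and, by a second block multiplication, $\vec{N}\vec{P} = \vec{P}\vec{N} = \vec{N}$. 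These two identities yield the clean recursion $\vec{N}^{2k} = c^{2k}\vec{P}$ for $k \ge 1$ and $\vec{N}^{2k+1} = c^{2k}\vec{N}$ for $k \ge 0$. This is the heart of the argument: once all powers collapse onto the two fixed matrices $\vec{P}$ and $\vec{N}$, the exponential series separates.

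With the powers in hand, I would substitute into $e^{\vec{N}} = \sum_{j \ge 0}\vec{N}^j/j!$, grouping even and odd terms to obtain
\[
  e^{\vec{N}} = \vec{I}_{n+1} + \Bigl(\sum_{k \ge 1}\frac{c^{2k}}{(2k)!}\Bigr)\vec{P} + \Bigl(\sum_{k \ge 0}\frac{c^{2k}}{(2k+1)!}\Bigr)\vec{N}.
\]
The even sum is $\cosh c - 1$ and the odd sum is $(\sinh c)/c$, giving $e^{\vec{N}} = \vec{I}_{n+1} + (\cosh c - 1)\vec{P} + \tfrac{\sinh c}{c}\vec{N}$. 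Finally I would expand this in block form: the term $\tfrac{\sinh c}{c}\vec{N}$ supplies the off-diagonal $\sinh\|\vec{u}\|\,\hat{\vec u}$ entries, the bottom-right entry becomes $1 + (\cosh c - 1) = \cosh\|\vec{u}\|$, and the top-left block becomes $\vec{I}_n + (\cosh c - 1)\hat{\vec u}\hat{\vec u}^\top = \cosh\|\vec{u}\|\,\hat{\vec u}\hat{\vec u}^\top + (\vec{I}_n - \hat{\vec u}\hat{\vec u}^\top)$. Multiplying through by $e^a$ and grouping the $\hat{\vec u}\hat{\vec u}^\top$ contributions with the off-diagonal block recovers exactly the two-matrix decomposition claimed in the statement.

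I expect the only delicate point to be the bookkeeping in the power recursion: verifying $\vec{N}\vec{P} = \vec{P}\vec{N} = \vec{N}$ carefully and tracking the index ranges so that the even and odd series sum precisely to $\cosh c - 1$ and $(\sinh c)/c$, rather than to index-shifted variants. Everything else is routine block-matrix algebra.
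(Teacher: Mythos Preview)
Your proof is correct, but it takes a genuinely different route from the paper. The paper proceeds by spectral decomposition: it computes the characteristic polynomial of $\vec{M}$, reads off the eigenvalues $a$ (with multiplicity $n-1$) and $a\pm\|\vec{u}\|$, exhibits the corresponding eigenvectors $(\vec{u},\pm\|\vec{u}\|)^\top$ and $(\vec{w},0)^\top$ with $\vec{w}\perp\vec{u}$, and then assembles $e^{\vec{M}}=\sum_i e^{\lambda_i}\hat{\vec v}_i\hat{\vec v}_i^\top$, using $\hat{\vec u}\hat{\vec u}^\top+\sum_i\hat{\vec w}_i\hat{\vec w}_i^\top=\vec{I}_n$ to collapse the $a$-eigenspace contribution. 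Your approach instead stays entirely at the level of the power series: the single observation $\vec{N}^2=c^2\vec{P}$ with $\vec{P}$ a projection satisfying $\vec{N}\vec{P}=\vec{N}$ forces all powers of $\vec{N}$ onto the span of $\{\vec{I},\vec{P},\vec{N}\}$, and the scalar series do the rest. Your argument is arguably more self-contained---it never names an eigenvector and avoids the bookkeeping of normalizing $\vec{v}_1,\vec{v}_2$ and invoking completeness of the orthonormal basis---while the paper's spectral route makes the structure of $e^{\vec{M}}$ (two rank-one pieces plus a projection onto $\vec{u}^\perp$) transparent from the outset rather than emerging at the end.
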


\begin{proof}
We symbolically exponentiate an $n+1 \times n+1$ matrix of the form 
\[
\vec M = 
\begin{pmatrix}
  aI_n & \vec{u} \\
  \vec{u}^\top & a
\end{pmatrix}.
\]
Since this matrix is real and symmetric, its eigenvalues $\lambda_i$ are positive and its unit eigenvectors $\vec{v}_i$ form an orthonormal basis. The method that we use to symbolically exponentiate it is to express it in the form 
\[
\vec M = \sum_{i=1}^n \lambda_i\vec{v}_i\vec{v}_i^\top.
\]
The exponential then becomes 
\[
e^{\vec M} = \sum_{i=1}^n e^{\lambda_i}\vec{v}_i\vec{v}_i^\top.
\]
As a matter of notation, let $\hat{\vec u}$ be the unit vector such that $\|\vec u\|\hat{\vec u} = \vec{u}$.

\paragraph{Eigenvalues.}
\label{sec:eigenvalues}

The characteristic equation for $\vec{M}$ is not difficult to calculate. It is:
\begin{align}
  (\lambda - a)^{n-1}(\lambda^2 -2a\lambda + a^2-\|\vec u\|^2)
  = (\lambda - a)^{n-1}(\lambda - a + \|\vec u\|)(\lambda - a - \|\vec u\|).
  \label{eq:7}
\end{align}
This yields $n-1$ eigenvalues equal to $a$, and the other two equal to $a + \|\vec u\|$ and $a - \|\vec u\|$. We label them $\lambda_1$ and $\lambda_2$, respectively, and the rest are equal to $a$.

\paragraph{Eigenvectors.}
\label{sec:eigenvectors}

First we show that $\vec M$ has two eigenvectors of the form $(\vec u,\pm\|\vec{u}\|)^\top$:
\[
\begin{pmatrix}
  aI_n & \vec{u} \\
  \vec{u}^\top & a
\end{pmatrix}
\begin{pmatrix}
  \vec{u} \\ \pm\|\vec{u}\|
\end{pmatrix} = 
\begin{pmatrix}
  (a\pm\|\vec{u}\|)\vec{u} \\ 
  \|\vec u\|^2 \pm a\|\vec{u}\|
\end{pmatrix} = 
(a\pm\|\vec{u}\|)\begin{pmatrix}
  \vec{u} \\ 
  \pm\|\vec u\|
\end{pmatrix},
\]
so these are eigenvectors with eigenvalues $a\pm\|\vec{u}\|$. 
We will call the corresponding eigenvectors $\vec{v}_1$ and $\vec{v}_2$.
Since $\vec M$ is symmetric, all of its eigenvectors are orthogonal. 
The remaining eigenvectors are of the form $(\vec w,0)^\top$, where $\vec{w}^\top\vec{u} = 0$:
\[
\begin{pmatrix}
  aI_n & \vec{u} \\
  \vec{u}^\top & a
\end{pmatrix}
\begin{pmatrix}
  \vec{w} \\ 0
\end{pmatrix} = 
\begin{pmatrix}
  a\vec{w} \\ 
  0
\end{pmatrix}.
\]
Clearly the corresponding eigenvalue for any such eigenvector is $a$, so there are $n-1$ of them. The corresponding parts of these eigenvectors are labeled $\vec{w}_i$, where $3\leq i\leq n+1$, and we assume they are unit vectors.

\paragraph{The Exponential.}
\label{sec:exponential}
For unit eigenvectors $\vec{\hat v}_i$, since 
\[
e^{\vec M} = \sum_{i=1}^n e^{\lambda_i}\frac{\vec{v}_i\vec{v}_i^\top}{\|\vec{v}_i\|^2},
\]
and the eigenvalue $a$ is of multiplicity $n-1$, we have 
\begin{align*}
  e^{\vec M} 
  &= \frac{e^{\lambda_1}}{2\|\vec{u}\|^2} 
  \begin{pmatrix}
    \vec{u}\vec{u}^\top & \|\vec{u}\|\vec{u} \\
    \|\vec{u}\|\vec{u}^\top & \|\vec{u}\|^2
  \end{pmatrix}
  + \frac{e^{\lambda_2}}{2\|\vec{u}\|^2}
  \begin{pmatrix}
    \vec{u}\vec{u}^\top & -\|\vec{u}\|\vec{u} \\
    -\|\vec{u}\|\vec{u}^\top & \|\vec{u}\|^2
  \end{pmatrix}
  + e^a\sum_{i=3}^n 
  \begin{pmatrix}
    \vec{w}_i\vec{w}_i^\top & \vec{0} \\
    \vec{0}^\top & 0
  \end{pmatrix}
\\ &= e^a\left[
  \frac{e^{\|\vec{u}\|}}{2}
  \begin{pmatrix}
    \hat{\vec u}\hat{\vec u}^\top & \hat{\vec u} \\
    \hat{\vec u}^\top & 1
  \end{pmatrix}
  + \frac{e^{-\|\vec{u}\|} }{2}
  \begin{pmatrix}
    \hat{\vec u}\hat{\vec u}^\top & -\hat{\vec u} \\
    -\hat{\vec u}^\top & 1
  \end{pmatrix} 
  +
  \begin{pmatrix}
    I_{n} - \hat{\vec u}\hat{\vec u}^\top & \vec{0} \\
    \vec{0}^\top & 0
  \end{pmatrix}
  \right]
\\ &= e^a\left[
  \begin{pmatrix}
    \cosh\|\vec{u}\|\hat{\vec u}\hat{\vec u}^\top & \sinh\|\vec{u}\|\hat{\vec u} \\
    \sinh\|\vec{u}\|\hat{\vec u}^\top & \cosh\|\vec{u}\|
  \end{pmatrix}
  +
  \begin{pmatrix}
    I_{n} - \hat{\vec u}\hat{\vec u}^\top & \vec{0} \\
    \vec{0}^\top & 0
  \end{pmatrix}
  \right].
\end{align*}
The last term in the equality is due to the fact that $\vec{\hat u}$ and the $\vec{\hat w}_i$ form an orthonormal basis for $\mathbb{R}^n$, so $\vec{\hat u}\vec{\hat u}^\top + \sum\vec{\hat w}_i\vec{\hat w}_i^\top = I_n$.
\end{proof}

Lemma~\ref{lemma:mat-exp} implies that we can exponentiate the event matrix $\vec{M}^{(t)}$ (see Algorithm~\ref{alg:arora}) quickly, as promised. 
In particular, we set $\vec{P}_i^{(t+1)} = c \exp(-\varepsilon \sum_t\vec{M}_i^{(t+1)})$ where $c$ normalizes the matrix to have unit trace.

\paragraph{Practical considerations.}
In Lemma~\ref{lemma:mat-exp}, large inputs to the functions $\exp$, $\cosh$, and $\sinh$ will cause them to rapidly overflow even at double-precision range. 
Fortunately there are two steps we can take. 
First, $\cosh(x)$ and $\sinh(x)$ converge exponentially to $\exp(x)/2$, so above a high enough value, we can simply approximate $\sinh(x)$ and $\cosh(x)$ with $\exp(x)/2$. 

Because $\exp$ can overflow just as much as $\sinh$ or $\cosh$, this doesn't solve the problem completely.  
However, since $\vec P$ is always normalized so that $\trace(\vec P)=1$, we can multiply the elements of $\vec P$ by any factor we choose and the factor will be normalized out in the end.
So above a certain value, we can use $\exp$ alone and throw a ``quashing'' factor ($e^{-\phi-q}$) into the equations before computing the result, and it will be normalized out later in the computation (this also means that we can ignore the $e^a$ factor). 
For our purposes, setting $q=20$ suffices. 
This trades overflow for underflow, but underflow can be interpreted merely as one kernel disappearing from significance.

Note that the structure of $\vec{P}^{(t)}$ also allows us to avoid storing it explicitly, since $(a\vec{I}) \bullet (b\hat{\vec u}\hat{\vec u}^{\top}) = ab$.
We need only store the coefficients of the blocks of the $\vec{P}_i^{(t)}$.

\paragraph{The exponentiation algorithm.}
From $\vec{M}_{i}^{(t)}$ in Algorithm \ref{alg:arora} and \eqref{eq:10}, we have $\vec{M}_{i}^{(t)} = \frac{1}{2\rho}(\vec{Q}_i(\balpha^{(t)}) + \rho \vec{I}_{n+1})$, where $\rho$ is a program parameter which is explained in \ref{ssec:algo}. 

Our 
$
\vec{Q}_i(\balpha) = \begin{pmatrix}
  \vec{I}_n & \vec{A}_i\balpha  \\
  (\vec{A}_i\balpha)^\top & 1
\end{pmatrix}
$ 
is of the form													
$\begin{pmatrix}
  a\vec{I}_n & \vec{u}_i \\
  \vec{u}_i^{\top} & a
\end{pmatrix}$, 
where $a=1$ $\forall i$ and $\vec{u}_i = \vec{A}_i \balpha$.
So we have
\begin{align}
  \label{eq:ui}
  \vec{u}^{\top}_i \vec{u}_i 
  = (\vec{A}_i \balpha)^{\top} \vec{A}_i \balpha 
  = \balpha^{\top} \vec{A}^{\top}_i \vec{A}_i \balpha 
  = \balpha^{\top} \frac{1}{r_i} \vec{G}_i \balpha
\end{align}
where the last equality follows from $\vec{A}_i^{\top} \vec{A}_i = \frac{1}{r_i}\vec{G}_i$ 
(cf. \eqref{eq:10}).
As we shall show in Algorithm~\ref{alg:mwu-qcqp}, at each iteration the matrix to be exponentiated is a sum of matrices of the form $\frac{1}{2\rho}(\vec{Q}_i(\sum_{t=1}^\tau \balpha^{(t)}) + \rho t \vec{I}_{n+1})$, so Lemma~\ref{lemma:mat-exp} can be applied at every iteration. 

We provide in detail the algorithm we use to exponentiate the matrix $\vec{M}$. This subroutine is called from Algorithm~\ref{alg:mwu-qcqp} in Section~\ref{sec:new-alg}. 

\begin{algorithm}[!h]
  \footnotesize
  \caption{{\sc Exponentiate-}$M$}
  \begin{algorithmic}
    \REQUIRE $\vec{y}$, $\balpha$, $\{\vec{G}_i\}$, $\eps'$, $\rho$
    \FOR {$i \in [1..m]$}
    \STATE $\|\vec{u}_i\| \leftarrow \sqrt{\balpha^{\top}\vec{G}_i\balpha}$
    \STATE $\vec{g}_i \leftarrow \frac{1}{\|\vec{u}_i\|}\vec{G}_i\balpha$
    \STATE $\|\vec{u}_i\| \leftarrow \frac{\eps'}{2\rho}\|\vec{u}_i\|$
    \ENDFOR
    \STATE $q \leftarrow \max_i \|\vec{u}_i\|$
    \IF {$q < 20$}
    \FOR {$i \in [1..m]$}
    \STATE $p_i^{11} \leftarrow \cosh(\|\vec{u}_i\|)$
    \STATE $p_i^{12} \leftarrow -\sinh(\|\vec{u}_i\|)$
    \ENDFOR
    \STATE $e_{\vec{M}} \leftarrow 1$
    \ELSE
    \FOR {$i \in [1..m]$}
    \STATE $p_i^{11} \leftarrow e^{\|\vec{u}_i\|-q}$
    \STATE $p_i^{12} \leftarrow -p_i^{11}$
    \ENDFOR
    \STATE $e_{\vec{M}} \leftarrow e^{-q}$
    \ENDIF
    \STATE $S \leftarrow m(n-1)e_{\vec{M}} + 2\sum_i p_i^{11}$
    \FOR {$i \in [1..m]$}

    \STATE $p_i^{12} \leftarrow p_i^{12} / S$
    \ENDFOR
    \STATE $\vec{g} \leftarrow \sum_i2p_i^{12}\vec{g}_i$
    \STATE Return $\vec{p}^{12}$, $\vec{g}$
  \end{algorithmic}
  \label{alg:exponentiate-m}
\end{algorithm}

\subsubsection{The forward step}
\label{sec:forward}

In the forward step, we wish to check if our primal solution $\vec{P}$ is feasible and
optimal, and if not find updates to $\balpha^{(t)}$. 
In order to do so, we apply the MMWU template.
The goal now is to find $\balpha^{(t)}$ such that 
\[
\sum_i \vec{Q}_i(\balpha^{(t)}) \bullet \vec{P}_i \ge 0,\ 
\balpha^{(t)} \ge 0,\ 
(\balpha^{(t)})^{\top}\vec{y} = 0,\ \text{and}\ 
(\balpha^{(t)})^{\top}\vone = 1.
\]
The existence of such a $\balpha^{(t)}$ will prove that the current guess $\vec{P}^{(t)}$ is either primal infeasible or suboptimal (see \citet{Arora:2007:CPA:1250790.1250823} for details). 

We now exploit the structure of $\vec{P}^{(t)}$ given by Lemma \ref{lemma:mat-exp}. 
In particular, let $p_i^{11} = p_i^{22} = e^a \cosh \|\vec{u}_i\|/\trace{\vec{P}}$ and $p_i^{12} = -e^a \sinh \|\vec{u}_i\|/\trace{\vec{P}}$. 
So 
\begin{equation*}
  \vec{Q}_i(\balpha^{(t)}) \bullet \vec{P}_i 
  = \begin{pmatrix}
    0 & \vec{A}_i\balpha^{(t)}  \\
    (\vec{A}_i\balpha^{(t)})^\top & 0
  \end{pmatrix} \bullet \vec{P}_i
  + \vec{I}_{n+1}  \bullet \vec{P}_i
  = 2p_i^{12}\hat{\vec{u}}_i^{\top}\vec{A}_i\balpha^{(t)} + \trace(\vec{P}_i)
\end{equation*}
$\sum_i \vec{Q}_i(\balpha^{(t)}) \bullet \vec{P}_i \geq 0$ then reduces to:
\begin{equation}
  (\balpha^{(t)})^{\top} \sum_{i=0}^m (2p_i^{12}\vec{A}_i\hat{\vec u}_i) \geq - \trace(\vec{P}). \label{eq:4}
\end{equation}
The right hand side is the negative trace of $\vec{P}$ (which is normalized to $1$), so this becomes
\begin{equation}
  \label{eq:oracle-constraint}
  (\balpha^{(t)})^{\top}\sum_i2p_i^{12}\vec{g}_i \geq -1,
\end{equation}
where $\vec{g}_i = (\frac{1}{r_i}\vec{G}_{i}\balpha)/(\frac{1}{r_i}\balpha^{\top}\vec{G}_{i}\balpha)^{1/2}$.
If we let $\vec{g} = \sum_i2p_i^{12}\vec{g}_i$ (which can be calculated at the end of the \emph{backward} step), then we have simply
$\vec{g}^\top\balpha \geq -1$ which is a simple collection of linear constraints that can \emph{always} be satisfied\footnote{The current margin borders a convex combination of points from each side. If we could not find a point such that the inequality is satisfied, then no point from the convex combination can be found on or past the margin, which is impossible.}.

Geometrically, $\vec{g}$ gives us a way to examine the training points that are farthest away from the margin. 
The higher a value $g_j$ is, the more it violates the current decision boundary. 
In order to find a $\balpha$ that satisfies \eqref{eq:oracle-constraint}, we simply choose the highest elements of $\vec{g}$ that correspond to both positive and negative labels, then set each corresponding entry in $\balpha$ to $1/2$. 
Algorithm \ref{alg:oracle-mkl} describes the pseudo-code for this process.

\begin{algorithm}[H]
  \footnotesize
  \caption{\textsc{Find}-$\balpha$}
  \label{alg:oracle-mkl}
  \begin{algorithmic}
    \REQUIRE $\vec{y}$, $\vec{g}$
    \STATE $P \leftarrow \{i \mid \vec{y}_i = 1\}$, $N \leftarrow \{i \mid \vec{y}_i = -1\}$
    \STATE $i_P \leftarrow \argmax_{i\in P} \vec{g}_i$, $i_N \leftarrow \argmax_{i\in N} \vec{g}_i$
    \STATE $\balpha \leftarrow \vec{0}$
    \STATE $\balpha_{i_P} \leftarrow 1/2$, $\balpha_{i_N} \leftarrow 1/2$
    \RETURN $\balpha$
    \ENSURE $\balpha$ s.t. $\balpha \geq 0$, $\balpha^{\top}\vec{1} = 1$, $\balpha^{\top}\vec{y} = 0$
  \end{algorithmic}
\end{algorithm}

\paragraph{Practical Considerations.} 
We highlight two important practical consequences of our formulation.
First, the procedure produces a very sparse update to $\balpha$: in each iteration, only two coordinates of $\balpha$ are updated. 
This makes each iteration very efficient, taking only linear time. 
Second, by expressing $\vec{u}_i$ in terms of $\vec{g}_i$ we never need to explicitly compute $\vec{A}_{i}$ (as $\vec{u}_i = \vec{A}_i\balpha$), which in turn means that we do not need to compute the (expensive) square root of $\vec{G_i}$ explicitly. 

Another beneficial feature of the dual-finding procedure for MKL is that terms involving the primal variables $\vec{P}$ are either normalized (when we set the trace of $\vec{P}$ to $1$) or eliminated (due to the fact that we have a compact closed-form expression for $\vec{P}$), \emph{which means that we never have to explicitly maintain $\vec{P}$}, save for a small number ($4m$) of variables. 

\subsection{Avoiding binary search for $\omega$}
\label{sec:avoid-binary-search}

The objective function in \eqref{eq:10} is \emph{linear}, so we can scale $s$ and $\balpha$ and use the fact that $s = \balpha^{\top}\vec{1} = \omega$ to transform the problem\footnote{This fact follows from the KKT conditions for the original problem. 
The support constraints of the SVM problem can be written as $\vec{G}\balpha + b\vec{y} \geq \vec{1}$. 
If we multiply both sides of this inequality by $\balpha^{\top}$ then it becomes an equality (by complementary slackness): 
$\balpha^{\top}\vec{G}\balpha = \balpha^{\top}\vec{1}$. 
$s$ is a substitution for $\balpha^{\top}\vec{G}\balpha$ in the MKL problem~\cite{Lanckriet04MKLSDP} so $s = \balpha^{\top}\vec{1} = \omega$ as well.}:
\begin{align*}
&  \textrm{find}\ \quad  \balpha \quad  \textrm{s.t.} \\
 &   1/\omega \geq \frac{1}{r_i} \balpha^{\top} \vec{G}_i \balpha, 
  \quad    \balpha^{\top} \vec{y} = 0, \quad \balpha^{\top}\vec{1} = 1, \quad \balpha \geq 0,
\end{align*}
where $\balpha = \omega\balpha$. 
The first constraint can be transformed back into an optimization; 
that is, $\min_\omega \max_{\balpha,i} \frac{1}{r_i} \balpha^{\top} \vec{G}_i \balpha$, subject to the remaining linear constraints. 
Because $\omega$ does not figure into the maximization, we can compute $\omega$ simply by maximizing $\frac{1}{r_i} \balpha^{\top} \vec{G}_i \balpha$.  
Practically, this means that we simply add the constraint $\balpha^{\top}\vec{1} = 1$, and the ``guess'' for $\omega$ is set to $1$. 
We then \emph{know} the objective, and only one iteration is needed, so the binary search is eliminated.

\subsection{Extracting the solution from the MMWU}
\label{ssec:soln}
We start by observing that $\sum_{i=1}^m\vec{Q}_i\bullet \vec{P}_i = 0$ (by complementary slackness), 
which can rewritten as 
\begin{align}
\label{eq:final1}
& \sum_{i=1}^m \frac{2p_i^{12}}{r_i} \left( \frac{r_i}{\balpha^\top\vec{G}_i\balpha} \right)^{1/2} \balpha^\top\vec{G}_i\balpha = 1.
\end{align}

Now recall (from section \ref{sec:back}) that 
$
\balpha^\top\vec{G}\balpha 
= \sum_{i=1}^m \mu_i\cdot\balpha^\top\vec{G}_i\balpha, 
$
and we also use the fact that $\balpha^\top\vec{G}\balpha = \balpha^\top\vec{1} = \omega = 1$. 
Combining the above two we have:
\begin{align}
\label{eq:final2}
\sum_{i=1}^m \mu_i \cdot \balpha^\top\vec{G}_i\balpha = 1
\end{align}

Matching \eqref{eq:final1} with \eqref{eq:final2} suggests that $\frac{2p_i^{12}}{r_i} \left( \frac{r_i}{\balpha^\top\vec{G}_i\balpha} \right)^{1/2}$ is the appropriate choice for $\mu_i$. 

\subsection{Putting it all together}
\label{ssec:algo}
Algorithm~\ref{alg:mwu-qcqp} summarizes the discussion in this section. 
The parameter $\eps$ is the error in approximating the objective function, but its connection to classification accuracy is loose.
We set the actual value of $\eps$ via cross-validation (see Section~\ref{sec:results}). 
The parameter $\rho$ is the \emph{width} of the SDP, a parameter that indicates how much the solution can vary at each step. 
$\rho$ is equal to the maximum absolute value of the eigenvalues of $\vec{Q}_i(\balpha^{(t)})$, for any $i$~\cite{Arora:2007:CPA:1250790.1250823}. 

\begin{lemma}
  $\rho$ is bounded by $3/2$.
\end{lemma}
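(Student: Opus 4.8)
The plan is to reduce $\rho$ to a bound on the single quantity $\|\vec u_i\|$ and then bound that quantity using the two-point structure produced by the oracle. First I would recall that $\rho$ is the largest absolute eigenvalue of $\vec{Q}_i(\balpha^{(t)})$ over all $i$, and that with the bottom-right entry fixed at $s=1$ (Section~\ref{sec:avoid-binary-search}) each $\vec{Q}_i(\balpha^{(t)})$ is exactly of the almost-diagonal form treated in Lemma~\ref{lemma:mat-exp}, with $a=1$ and $\vec u = \vec u_i = \vec{A}_i\balpha^{(t)}$. The characteristic equation \eqref{eq:7} then gives the spectrum $\{1,\,1+\|\vec u_i\|,\,1-\|\vec u_i\|\}$, the value $1$ occurring with multiplicity $n-1$. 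Since $\|\vec u_i\|\ge 0$, the largest absolute eigenvalue is $1+\|\vec u_i\|$, so $\rho = 1 + \max_i\|\vec u_i\|$, and it suffices to prove $\|\vec u_i\|\le 1/2$ for every $i$.

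Next I would evaluate $\|\vec u_i\|$ on the specific $\balpha=\balpha^{(t)}$ returned by the oracle. By \eqref{eq:ui}, $\|\vec u_i\|^2 = \tfrac{1}{r_i}\balpha^\top\vec G_i\balpha$ with $r_i = \trace(\vec K_i)$. Algorithm~\ref{alg:oracle-mkl} sets exactly two coordinates of $\balpha$ to $1/2$ — one at a positive example $i_P$ and one at a negative example $i_N$ — so, using $\vec G_i = \vec Y\vec K_i\vec Y$, the quadratic form collapses to
\[ \balpha^\top\vec G_i\balpha = \tfrac14\big((\vec K_i)_{i_Pi_P} + (\vec K_i)_{i_Ni_N} - 2(\vec K_i)_{i_Pi_N}\big), \]
i.e.\ one quarter of the squared kernel distance between the two selected points under $\vec K_i$.

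The remaining step is to show this is at most $r_i/4$, equivalently that the selected pair's kernel distance is at most $\trace(\vec K_i)$. The two diagonal terms are bounded by the full trace, $(\vec K_i)_{i_Pi_P} + (\vec K_i)_{i_Ni_N} \le \sum_j (\vec K_i)_{jj} = r_i$, since a positive semidefinite matrix has non-negative diagonal; and the cross term $-2(\vec K_i)_{i_Pi_N}$ is controlled by positive semidefiniteness, being non-positive whenever the base kernels have non-negative entries (as for unit-diagonal/normalized kernels such as RBF, the standard setting here) and otherwise estimated via $|(\vec K_i)_{i_Pi_N}| \le \sqrt{(\vec K_i)_{i_Pi_P}(\vec K_i)_{i_Ni_N}}$. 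This yields $\balpha^\top\vec G_i\balpha \le r_i/4$, hence $\|\vec u_i\|\le 1/2$ and $\rho \le 3/2$.

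I expect the cross term to be the main obstacle. The crude Cauchy--Schwarz/AM--GM estimate on $(\vec K_i)_{i_Pi_N}$ only gives $\balpha^\top\vec G_i\balpha \le r_i/2$, that is $\rho \le 1 + 1/\sqrt2$, so obtaining the clean constant $3/2$ hinges on using the non-negativity (unit-diagonal normalization) of the base kernel entries to drop the cross term from the upper bound. The rest of the argument is a direct substitution of the sparse oracle output into \eqref{eq:ui}, together with the eigenvalue computation already carried out in Lemma~\ref{lemma:mat-exp}.
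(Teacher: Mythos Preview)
Your plan is essentially the paper's own proof. The paper computes the eigenvalues of $\vec{Q}_i(\balpha^{(t)})$ as $1$ and $1\pm\|\vec{A}_i\balpha^{(t)}\|$, expands $(\balpha^{(t)})^\top\vec{G}_i\balpha^{(t)}$ on the two nonzero coordinates $j,k$ with opposite labels, and then asserts that the cross terms $\vec{G}_{i(jk)},\vec{G}_{i(kj)}$ are negative so that the quadratic form is at most $\tfrac14(\vec{G}_{i(jj)}+\vec{G}_{i(kk)})\le r_i/4$. You have correctly isolated the one nontrivial step---dropping the cross term---and you are right that it relies on $(\vec K_i)_{i_Pi_N}\ge 0$; the paper uses exactly this (via $y_{i_P}y_{i_N}=-1$) but states it without further justification, so your observation that the clean $3/2$ hinges on non-negative base-kernel entries is accurate and, if anything, more explicit than the paper's treatment.
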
 
\begin{proof}
  $\rho$ is defined as the maximum of $\|\vec{Q}(\balpha^{(t)})\|$ for all $t$. 
  Here $\|\cdot\|$ denotes the largest eigenvalue in absolute value~\cite{Arora:2007:CPA:1250790.1250823}. 
  Because $s = \omega = 1$ (see Section~\ref{sec:new-alg}), the eigenvalues of $\vec{Q}_i(\balpha^{(t)})$ are $1$ (with multiplicity $n-1$), and $1\pm\|\vec{A}_i\balpha^{(t)}\|$. 
  The greater of these in absolute value is clearly $1+\|\vec{A}_i\balpha^{(t)}\|$.

  $\|\vec{A}_i\balpha^{(t)}\|$ is equal to 
  \[
  ((\balpha^{(t)})^T\vec{A}_i^T\vec{A}_i\balpha^{(t)})^{\frac{1}{2}}
  = \left( \frac{1}{r_i}(\balpha^{(t)})^T\vec{G}_i\balpha^{(t)} \right)^{\frac{1}{2}}.
  \]
  $\balpha^{(t)}$ always has two nonzero elements, and they are equal to $1/2$. They also correspond to values of $\vec{y}$ with opposite signs, so if $j$ and $k$ are the coordinates in question, $(\balpha^{(t)})^T\vec{G}_i\balpha^{(t)} \leq (1/4)(\vec{G}_{i(jj)} + \vec{G}_{i(kk)})$, because $\vec{G}_{i(jk)}$ and $\vec{G}_{i(kj)}$ are both negative. 
Because of the factor of $1/r_i$, and because $r_i$ is the trace of $\vec{G}_i$, $\|\vec{A}_i\balpha^{(t)}\| \leq 1/2$. 
This is true for any of the $i$, so the maximum eigenvalue of $\vec{Q}(\balpha^{(t)})$ in absolute value is bounded by $1 + 1/2 = 3/2$.
\end{proof}

\paragraph{Running time.} 
Every iteration of Algorithm~\ref{alg:mwu-qcqp} will require a call to \textsc{Find}-$\balpha$, a call to \textsc{Exponentiate}-$M$ and an update to $\vec{G}_i\balpha$ and $\balpha^{\top}\vec{G}_i\balpha$. 
\textsc{Find}-$\balpha$ requires a linear search for two maxima in $\vec{g}$, so the first is $O(n)$. 
The latter are each $O(mn)$, which dominate \textsc{Find}-$\balpha$.

Algorithm~\ref{alg:mwu-qcqp} requires a total of $T$ iterations at most, where $T = \frac{8\rho^2}{\eps^2}\ln(n)$.
Since we only require one run of the main algorithm, the running time is bounded by 
$
  O\left( mn \ln(n) \frac{1}{\eps^2} \right).
$

\begin{algorithm}
  \footnotesize
  \caption{\mwumkl}
  \label{alg:mwu-qcqp}\
  \begin{algorithmic}
    \REQUIRE $\vec{g}^{(1)} = \vec{0}$;
    \\$\rho$, the width; 
    \\$\eps$, the desired approximation error 
    \STATE Set $\eps' = -\ln(1-\frac{\eps}{2\rho})$
    \STATE Set $T = \frac{8\rho^2}{\eps^2}\ln(n)$
    \REPEAT[$T$ times] 
    \STATE Get $\balpha^{(t)}$ from Algorithm~\ref{alg:oracle-mkl}
    \IF{Algorithm~\ref{alg:oracle-mkl} failed}
    \STATE Return
    \ENDIF
    \STATE Update $\balpha = \balpha + \balpha^{(t)}$
    \STATE Set $\vec{M}_i^{(t)} = \frac{1}{2\rho} \left( \vec{Q}_i(\balpha^{(t)}) + \rho \vec{I}_{n+1} \right)$
    \STATE Set $\vec{W}_i^{(t)} = e^{-\eps' \sum_{t=1}^T \vec{M}_i^{(t)}}$
    \STATE Set $\vec{P}_i^{(t+1)} = \vec{W}_i^{(t)}/\trace(\vec{W}_i^{(t)})$
    \STATE Compute $\vec{g}^{(t+1)}$ from $\vec{P}^{(t+1)}$, $\{\vec{G}_i\}$, and $\balpha$
    \UNTIL{$t = T$}
    \STATE Return $\frac{1}{T}\balpha$, $\vec{P}^{(T+1)}$
  \end{algorithmic}
\end{algorithm}

\section{Experiments}
\label{sec:results}

In this section we compare the empirical performance of \mwumkl with other multiple kernel learning algorithms. 
Our results have two components: 
(a) \emph{qualitative} results that compares test accuracies on small scale datasets, and 
(b) \emph{scalability} results that compares training time on larger datasets.

We compare \mwumkl with the following baselines: 
(a) \uni (uniformly weighted combination of kernels), and
(b) LibLinear~\cite{REF08a} with Nystr\"om kernel approximations for each kernel (hereafter referred to as \llplus).
We evaluate these MKL methods on binary datasets from UCI data repository. 
They include: 
(a) small datasets \iono, \cancr, \pima, \sonar, \heart, \vote, \wdbc, \wpbc, 
(b) medium dataset \mush, and 
(c) comparatively larger datasets \adult, \cod, and \web. 

Classification accuracy and kernel scalability results are presented on small and medium datasets (with many kernels). 
Scalability results (with $12$ kernels due to memory constraints) are provided for large datasets. 
Finally, we show results for lots of kernels on small data subsets.

\paragraph{Uniform kernel weights.}
\uni is simply LibSVM~\cite{libsvm} run with a kernel weighted equally amongst all of the input kernels (where the kernel weights are normalized by the trace of their respective Gram matrices first).
The performance of \uni is on par or better than \llplus on many datasets (see Figure~\ref{fig:mix-small-error}) and the time is similar to \mwumkl.
However \uni does not scale well due to the poor scaling of LibSVM beyond a few thousand samples (see Figure~\ref{fig:cod-time-scale}), because of the need to hold the entire Gram matrix in memory
\footnote{This is true even when LibSVM is told to use one kernel, which it can compute on the fly -- the scaling of LibSVM is $O(n^2)$ - $O(n^3)$~\cite{libsvm}, poor compared to \mwumkl and \llplus with increasing sample size.}.
We employ Scikit-learn~\cite{scikit-learn} because it offers efficient access to LibSVM.

\paragraph{LibLinear~\cite{REF08a} with Nystr\"om kernel approximations~\cite{williams2001using,yangnystrom} (\llplus).}
One important observation about multiple kernel learning is that \uni performs as well or better than many MKL algorithms with better efficiency.
Along this same line of thought, we should consider comparison against methods that are as simple as possible.
One of the very simplest algorithms to consider is to use a linear classifier (in this case, LibLinear~\cite{REF08a}), and transform the features of the data with a kernel approximation.
For our purposes, we use Nystr\"om approximations as described by \citet{williams2001using} and discussed further by \citet{yangnystrom}. 
Because LibLinear is a primal method, we don't need to scale each kernel -- each kernel manifests as a set of features, which the algorithm weights by definition.

For the Nystr\"om feature transformations, one only needs to specify the kernel function and the number of sample points desired from the data set. 
We usually use $150$ points, unless memory constraints force us to use fewer.
Theoretically, if $s$ is the number of sample points, $n$ the number of data points, and $m$ the number of kernels, then we would need space to store $O(snm)$ double-precision floats.
With regard to time, the training task is very rapid -- the transformation is the bottleneck (requiring $O(s^2mn)$ time to transform every point with every kernel approximation). 

We employ Scikit-learn~\cite{scikit-learn} for implementations of both the linear classifier and the kernel approximation because 
(a) this package offloads linear support-vector classification to the natively-coded LibLinear implementation,
(b) it offers a fast kernel transformation using the NumPy package, and
(c) Scikit-learn makes it very easy and efficient to chain these two implementations together.
In practice this method is very good and very fast for low numbers of kernels (see Figures~\ref{fig:mix-small-error}, \ref{fig:adult-error}, and \ref{fig:web-error}).
For high numbers of kernels, this scaling breaks down due to time and memory constraints (see Figure~\ref{fig:mix-time-ker-size}).

\paragraph{Legacy MKL implementations.}
In all cases, we omit the results for older MKL algorithm implementations such as 
(a) \silp~\cite{Sonnenburg06MKLSILP},
(b) \sdp~\cite{Lanckriet04MKLSDP}, 
(c) \smkl~\cite{Rakotomamonjy07MKL}, 
(d) \lmkl~\cite{DBLP:conf/nips/XuJKL08}, and 
(e) \glmkl~\cite{DBLP:conf/icml/XuJYKL10}
which take significantly longer to complete, have no significant gain in accuracy, and do not scale to any datasets larger than a few thousand samples. 
For example, on \sonar (one of the smallest sets in our pool), each iteration of \silp takes about $4500$ seconds on average whereas \uni requires $0.03$ seconds on average. 

\paragraph{Experimental parameters.}
\begin{table}[!htbp]
  \footnotesize
  \centering
  \begin{tabular}{|c|c|c|c|}
    \hline
		\textbf{Size} & \textbf{Dataset}       & \textbf{\#Points}  & \textbf{\#Dim}\\
    \hline
                  & \cancr     & 683   & 9 
        \\        & \heart   & 270   & 13
				\\        & \iono    & 351   & 33
        \\ Small  & \pima    & 768   & 8
        \\        & \sonar   & 208   & 60
        \\        & \vote    & 435   & 16
        \\        & \wdbc    & 569   & 30
        \\        & \wpbc    & 198   & 33
        \\
    \hline
        Medium    & \mush      & 8124  & 112
		\\
    \hline
                  & \adult     & 39073 & 123
		    \\ Large  & \cod     & 47628 & 8
        \\        & \web     & 64700 & 300
        \\
    \hline
  \end{tabular}   
  \caption{\footnotesize Datasets used in experiments.}
  \label{tab:datasets} 
\end{table}

Similar to \citet{Rakotomamonjy07MKL} and \citet{DBLP:conf/icml/XuJYKL10}, we test our algorithms on a base kernel family of $3$ polynomial kernels (of degree $1$ to $3$) and $9$ Gaussian kernels.
Contrary to~\cite{Rakotomamonjy07MKL,DBLP:conf/icml/XuJYKL10}, however, we test with Gaussian kernels that have a tighter range of bandwidths ($\{2^{0}, 2^{1/2}, \ldots, 2^{4}\}$, instead of $\{2^{-3}, 2^{-2}, \ldots, 2^{5}\}$). 
The reason for this last choice is that our method actively seeks solutions for each of the kernels, and kernels that encourage overfitting the training set (such as low-bandwidth Gaussian kernels) pull \mwumkl away from a robust solution.

For small datasets, kernels are constructed using each single feature and are repeated $30$ times with different train/test partitions.
For medium and large datasets, due to memory constraints on \llplus, we test only on $12$ kernels constructed using all features, and repeat only $5$ times. 
All kernels are normalized to trace $1$. 
Results from small datasets are presented with a $95$\% confidence interval that the median lies in the range.
Results from medium-large datasets present the median, with the min and max values as a range around the median.
In each iteration, $80$\% of the examples are randomly selected as the training data and the remaining $20$\% are used as test data. 
Feature values of all datasets have been scaled to $[0,1]$. 
SVM regularization parameter $C$
is chosen by cross-validation.
For example, in Figure~\ref{fig:mix-small-error} results are presented for the best value of $C$ for each dataset and algorithm. 

For \mwumkl, we choose $\eps$ by cross-validation. 
Most datasets get $\eps = 0.2$, but the exceptions are \web ($\eps = 0.07$),  \cod ($\eps = 0.07$), and \adult ($\eps = 0.05$).
Contrary to existing works we do not compare the number of SVM calls (as \mwumkl does not explicitly use an underlying SVM) and the number of kernels selected.

Experiments were performed on a machine with an Intel\textsuperscript{\textregistered} Core\textsuperscript{TM} 2 Quad CPU ($2.40$ GHz) and 2GB RAM. 
All methods have an outer test harness written in Python. 
\mwumkl also uses a test harness in Python with an inner core written in C++.

\begin{figure}[!htbp]
  \centering
  \includegraphics[width=0.8\columnwidth]{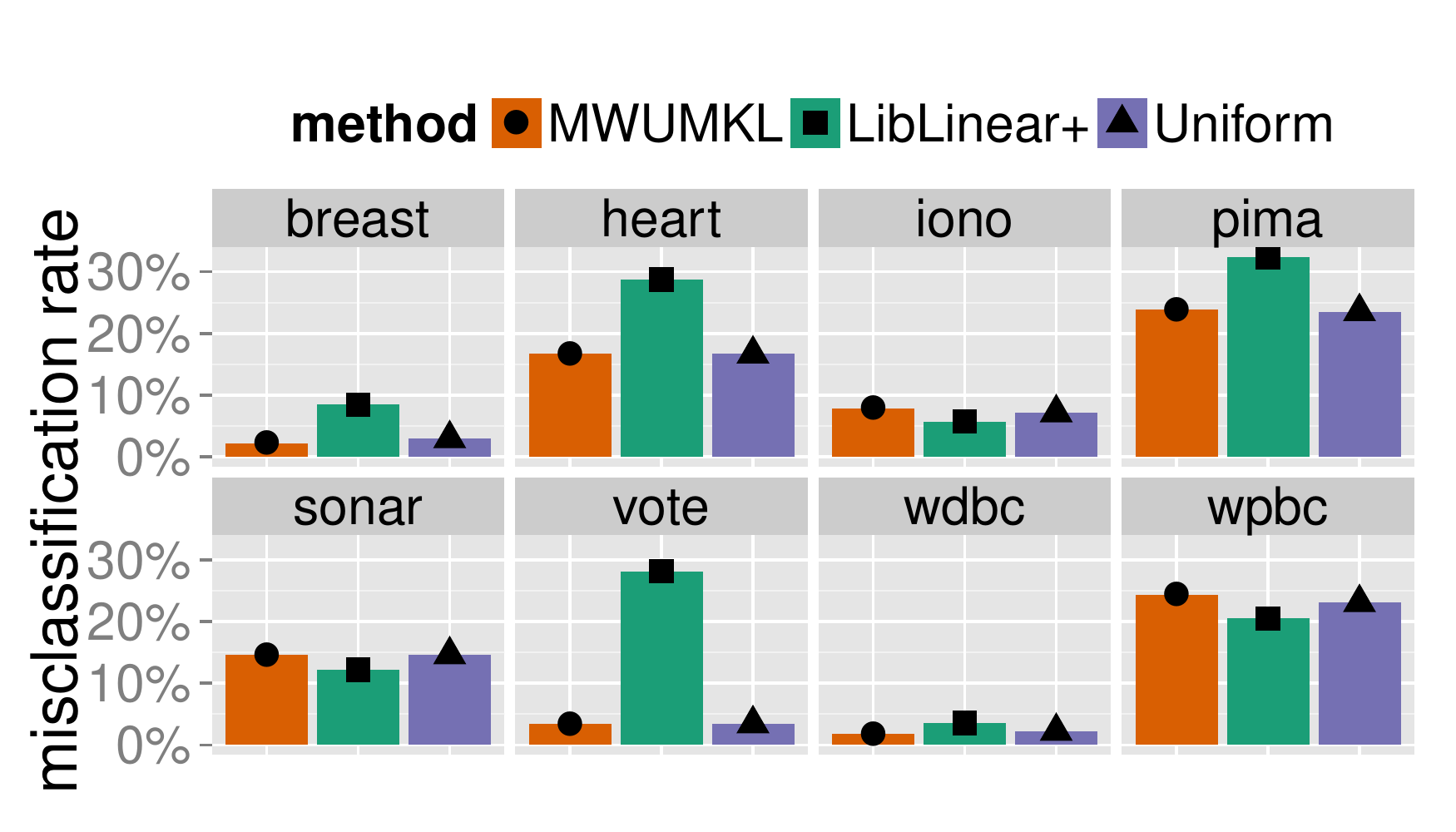}
  \caption{\footnotesize Median misclassification rate for small datasets.}
  \label{fig:mix-small-error}
\end{figure}

\paragraph{Accuracy.}
On small datasets our goal is to show that \mwumkl compares favorably with \llplus and \uni in terms of test accuracies. 

In Figure~\ref{fig:mix-small-error} we present the median misclassification rate for each small dataset over 30 random training/test partitions.
In each case, we train the classifier with $12$ kernels for each feature in the dataset, and each kernel only operates on one feature.
We are able either to beat the other methods or remain competitive with them.

\begin{figure}[!htbp]
  \centering
  \includegraphics[width=0.8\columnwidth]{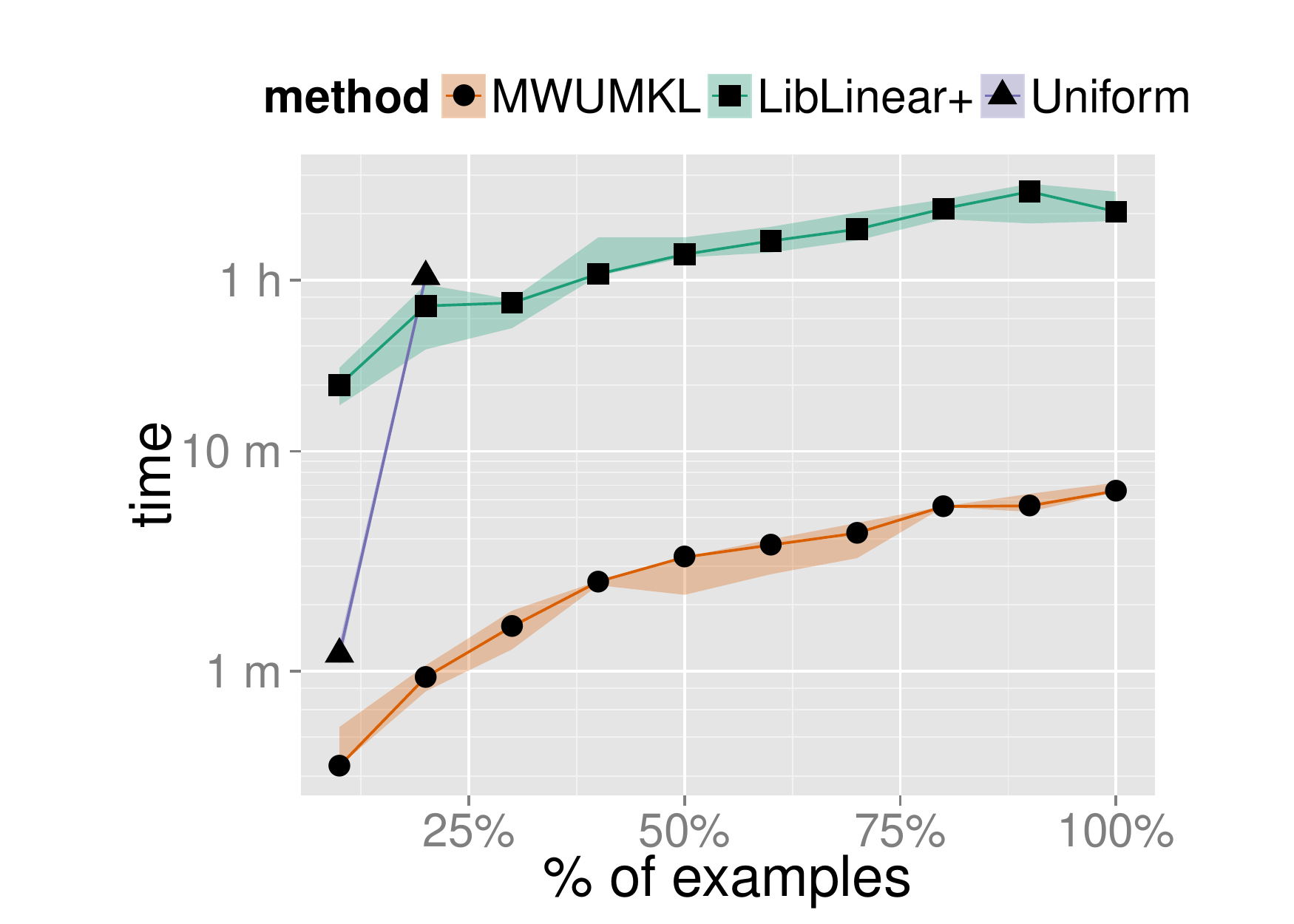}
  \caption{\footnotesize \cod ($n=59535$, $d=8$) with $12$ kernels.}
  \label{fig:cod-time-scale}
\end{figure}

\paragraph{Data Scalability.}
Both \mwumkl and \llplus are much faster as compared with \uni. 
At this point, \adult, \cod, and \web are large enough datasets that \uni fails to complete because of memory constraints.
This can be seen in Figure~\ref{fig:cod-time-scale}, where we plot training time versus the proportion of the training data used -- the training time taken by \uni rises sharply and we are unable to train on this dataset past $11907$ points.
Hence, for the remaining experiments on large datasets, we compare \mwumkl with \llplus.
In Figures~\ref{fig:adult-error} and \ref{fig:web-error}, we choose a random partition of train and test, and then train with increasing proportions of the training partition
(but always test with the whole test partition).
With more data, our algorithm settles in to be competitive with \llplus.
\begin{figure}[!htbp]
  \captionsetup[subfigure]{labelformat=empty}
  \centering
	\subfloat[]{
		\includegraphics[width=0.8\columnwidth]{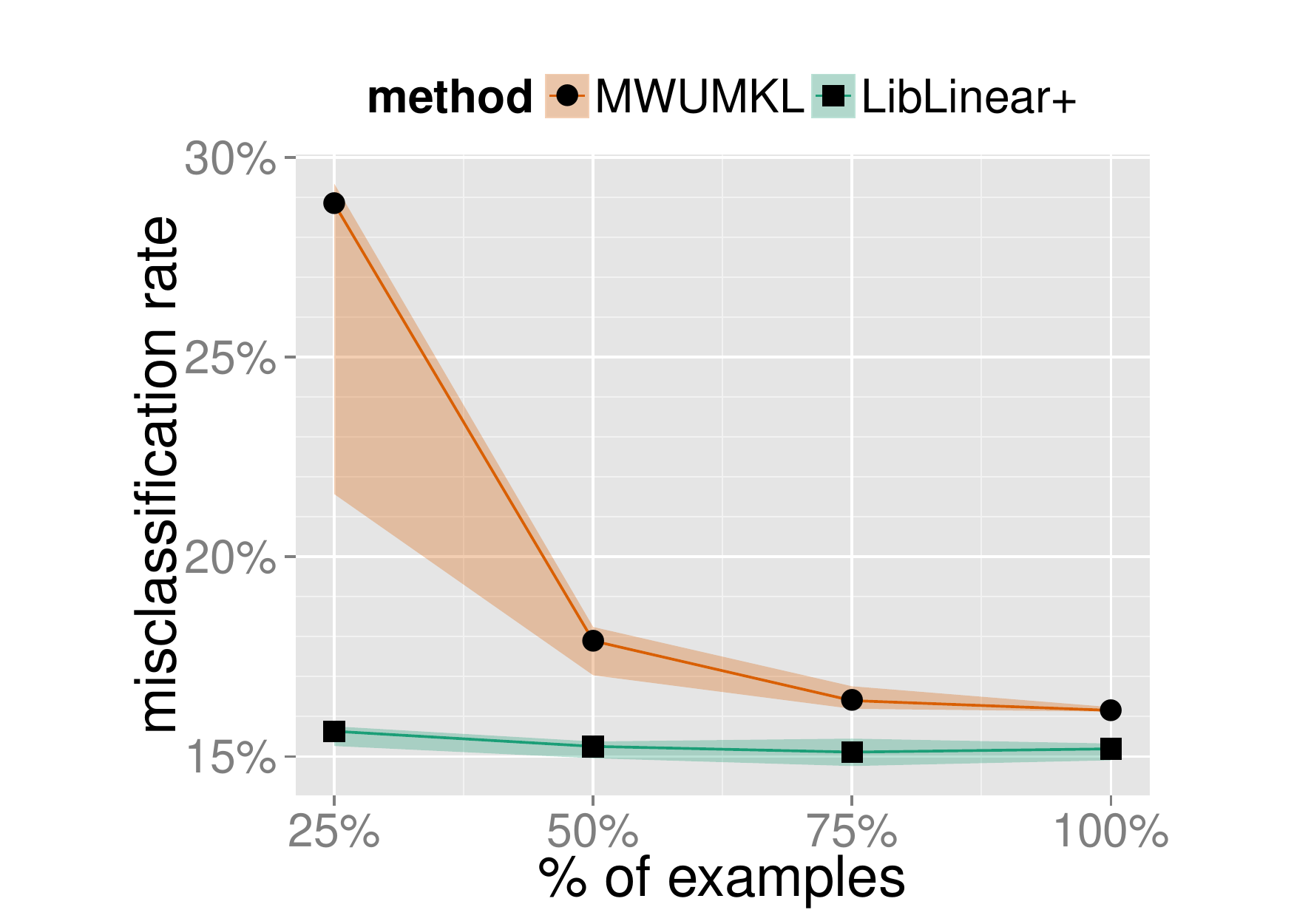}
		\label{fig:adult-error}
	}\\
	\vspace{-0.25in}
	\subfloat[]{
		\includegraphics[width=0.8\columnwidth]{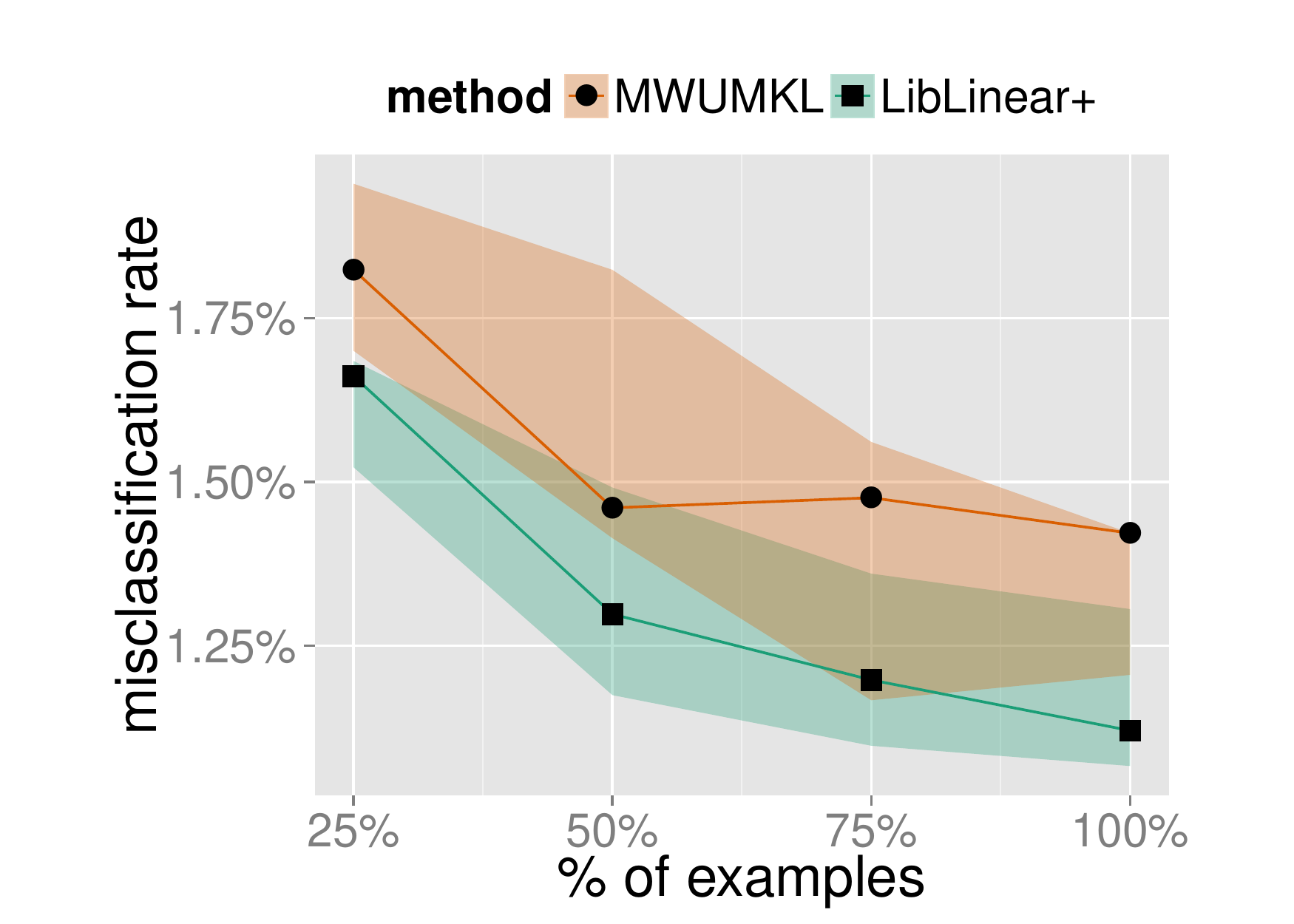}
  	\label{fig:web-error}
	}
	\vspace{-0.25in}
  \caption{\footnotesize \adult ($n=48842$, $d=123$) and \web ($n=64700$, $d=300$) with $m=12$ kernels}
\end{figure}

\paragraph{Kernel Scalability.}
We aim to demonstrate not only that \mwumkl performs well with the number of examples, but also that it performs well against the number of kernels.
In fact, for an MKL algorithm to be truly scalable it should do well against \emph{both} examples and kernels.

For kernel scalability, we present the training times for the best parameters of several of the datasets, divided by the number of kernels used, versus the size of the dataset (see Figure~\ref{fig:mix-time-ker-size}). 
We divide time by number of kernels because time scales very close to linearly with the number of kernels for all methods.
Also presented are log-log models fit to the data, and the median of each experiment is plotted as a point.

We report the time for the same experiments that produced the results in Figure~\ref{fig:mix-small-error}, and also train on increasing proportions of \mush ($1625$, $3250$, $4875$, and $6500$ examples) with $1344$ per-feature kernels.
With these selections, we are testing $mn$ in the neighborhood of $8.7$ million elements.

\begin{figure}[!htbp]
  \centering
  \includegraphics[width=0.8\columnwidth]{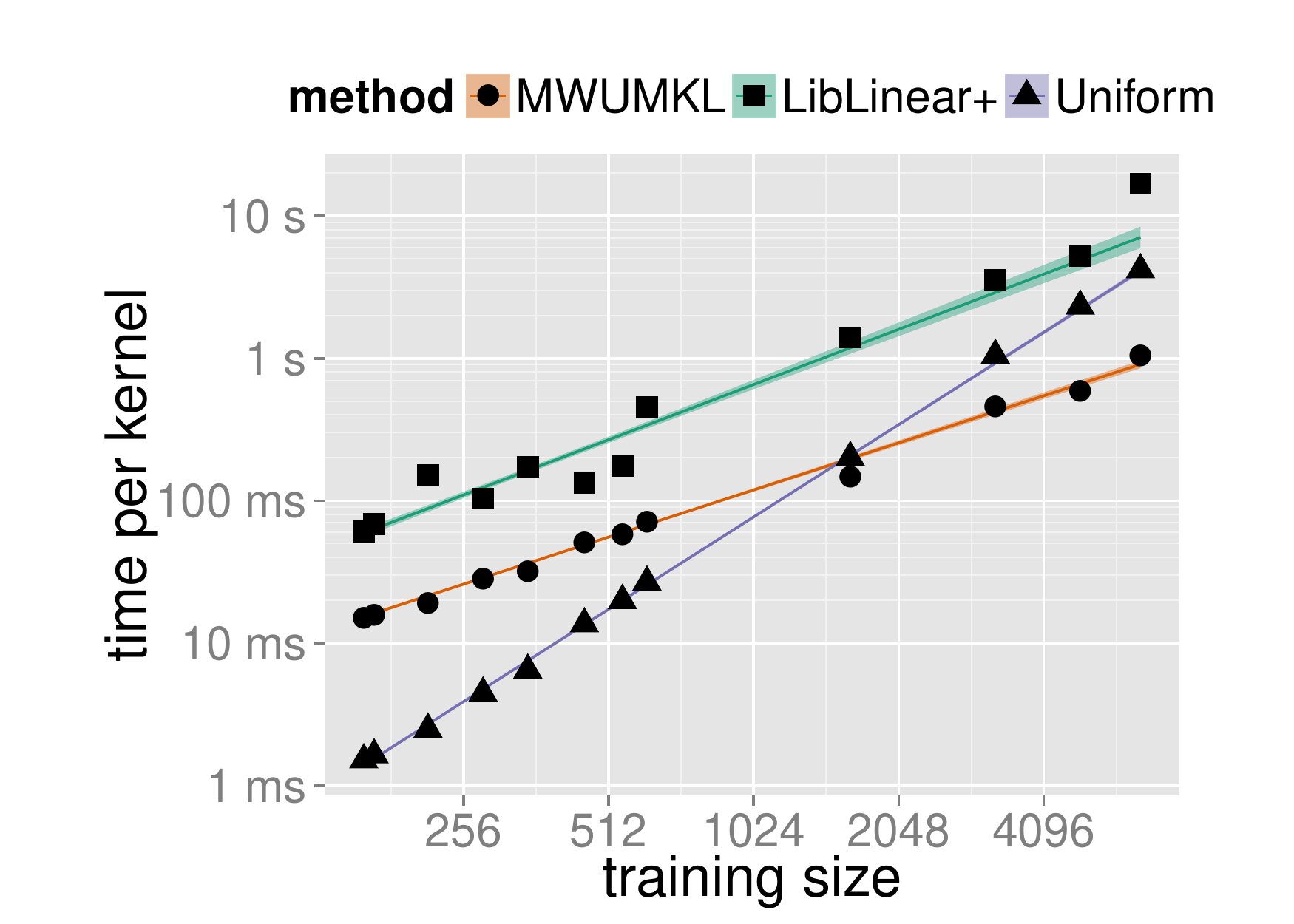}
  \caption{\footnotesize Time per kernel vs. data size for small and medium data sets (log-log).}
  \label{fig:mix-time-ker-size}
\end{figure}

As expected, \uni scales quadratically or more with the number of examples, performing very well at the lower range. 
The number of examples from \mush is not so high that LibSVM runs out of memory, but we do see the algorithm's typical scaling.

\llplus shows slightly superlinear scaling, with a high multiplier due to the matrix computations required for the feature transformations.
As we run the algorithm on \mush, the number of samples taken for the kernel approximations is reduced so that the features can fit in machine memory. 
Even so, this reduction doesn't offer any help to the scaling and at $6500$ examples with $1344$ kernels, training time is several hours.

Even though we reduced the number of samples for \llplus, \mwumkl outperforms both \uni and \llplus when both examples and kernels are greater than about $10^3$.

\paragraph{Dynamic Kernels.}
We also present results for a few datasets with lots of kernels.
By computing columns of the kernel matrices on demand, we can run with a memory footprint of $O(mn)$, improving scalability without affecting solution quality (a technique also used in \smomkl \cite{DBLP:conf/nips/VishwanathansAV10}). 
Table \ref{tab:res-dyn-ker} shows that we can indeed scale well beyond tens of thousands of points, as well as many kernels. 

\begin{table}[!htbp]
  \footnotesize
  \centering
  \begin{tabular}{|c|c|c|c|}
    \hline
    \textbf{Dataset}       & \textbf{\#Points}  & \textbf{\#Kernels}    & \textbf{Time}   \\
    \hline
		\adult     & 39073     &  3           &  $13$ minutes      \\
		\cod       & 47628     &  3           &  $147$ seconds      \\
		\sonar1M   & 208       &  1000000     &  $3.65$ hours      \\
    \hline
  \end{tabular}   
  \caption{\footnotesize \mwumkl with on-the-fly kernel computations.}
  \label{tab:res-dyn-ker} 
\end{table}

We choose the above datasets to compare against another work on scalable MKL~\cite{DBLP:conf/kdd/JainVV12}. 
\citet{DBLP:conf/kdd/JainVV12} indicate the ability to deal with millions of kernels, but in effect the technique also has a memory footprint of $\Omega(mn)$ (the footprint of \mwumkl is $\Theta(mn)$, in contrast).
This limits any such approach to \emph{either} many kernels \emph{or} many points, but not both.

Since the work in \citet{DBLP:conf/kdd/JainVV12} does not provide accuracy numbers, a direct head-to-head comparison is difficult to make, but we can make a subjective comparison. 
The above table shows times for \mwumkl with accuracy similar to or better than what \llplus can achieve on the same datasets. 
The time numbers we achieve are similar in order of magnitude when scaled to the number of kernels demonstrated in \citet{DBLP:conf/kdd/JainVV12}.

\section{Conclusions and Future Work}
\label{sec:disc}

We have presented a simple, fast and easy to implement algorithm for multiple kernel learning.
Our proposed algorithm develops a geometric reinterpretation of kernel learning
and leverages fast MMWU-based routines to yield an efficient learning algorithm.
Detailed empirical results on data scalability, kernel scalability and with dynamic kernels demonstrate that we are significantly faster than existing legacy MKL implementations and outpeform \llplus as well as \uni.

Our current results are for a single machine. As mentioned earlier, one of our
future goals is to add parallellization techniques to improve the scalability of
\mwumkl over data sets that are large and use a large number of kernels. 
The \mwumkl algorithm lends itself easily to the \emph{bulk synchronous parallel} (BSP) framework~\cite{Valiant:1990:BMP:79173.79181}, as most of the work is done in the loop that updates $\vec{G}\alpha$ (see the last line of the loop in Algorithm~\ref{alg:mwu-qcqp}). 
This task can be ``sharded'' for either kernels or data points, and scalability of $O(mn)$ would not suffer under BSP.
Since there are many BSP frameworks and tools in use today, this is a natural direction to experiment.

\section{Acknowledgments}
This research was partially supported by the NSF under grant CCF-0953066. 
The authors would also like to thank Satyen Kale and S\'{e}bastien Bubeck for their valuable feedback.

\clearpage
{\small
\bibliography{mkl}
\bibliographystyle{plainnat}
}

\end{document}